\newtheorem{theorem}{Theorem}
\newtheorem{proposition}[theorem]{Proposition}
\title{Prediction-powered estimators for finite population statistics in highly imbalanced textual data: Public hate crime estimation}
\date{}
\author{
  Hannes Waldetoft\thanks{Department of Statistics, Uppsala University.}
  \and
  Jakob Torgander\footnotemark[1]
  \and
  Måns Magnusson\footnotemark[1]  
}
\begin{document}

\maketitle

\begin{abstract}
\noindent Estimating population parameters in finite populations of text documents can be challenging when obtaining the labels for the target variable requires manual annotation. To address this problem, we combine predictions from a transformer encoder neural network with well-established survey sampling estimators using the model predictions as an auxiliary variable. The applicability is demonstrated in Swedish hate crime statistics based on Swedish police reports. Estimates of the yearly number of hate crimes and the police's under-reporting are derived using the Hansen-Hurwitz estimator, difference estimation, and stratified random sampling estimation. We conclude that if labeled training data is available, the proposed method can provide very efficient estimates with reduced time spent on manual annotation.
\end{abstract}

\section{Introduction}

Estimation in finite populations has a long tradition in statistics \cite{cochran1977sampling,hansen1943theory,horvitz1952,sarndal2003model}. In many applications, totals and proportions of a known finite population are of interest, e.g., when sample surveys are used in research \cite{magnussen2011horvitz,rudolph2014estimating,boo2022high}. It is also the common interest in the production of official statistics.

Official statistics should provide reliable, lasting and high-quality statistical estimates of importance to society \cite{tille2022some}. Hence, it provides valuable information for policy making, research, and the general public. Multiple quality dimensions serve as guiding principles in official statistics, including relevance, accuracy, timeliness, accessibility, and coherence \cite{StatisticsCanada2017}. Of these dimensions, accuracy plays a crucial role and is in many countries stipulated in law \cite{swedish_law,EUReg223_2009,NAP27934}.

In official statistics, and more recently in research, it is common for the population of interest to consist of textual data or collections of documents \cite{bonikowski2022,do_ollion2024augmented}. Statistical estimates of interest can be totals or proportions of the document collection with specific properties, e.g., the total number of hate crimes during a year, particular symptoms in medical journals, the number of industries or businesses belonging to a specific category, or identifying populism, nationalism and authoritarianism in political speeches \cite{luo2021deep,yang2025emerging,bonikowski2022}. In these settings, producing accurate statistics can be challenging if there are no existing labels of interest or if the labels are erroneous, requiring manual expert judgment \cite{rilkoff2024innovations,anmalda_brott}. Extracting the desired information from a text document can require specialized knowledge of the relevant domain, and a large volume of documents can make manual processing very time-consuming, especially if the documents of interest are a tiny proportion of the general population \cite{settles2009active, LundgrenLejonstad2023,bonikowski2022}. 

An increasingly popular approach is to reduce the annotation burden by employing machine learning methods to predict the labels of interest, sometimes referred to as prediction-powered inference \cite{angelopoulos2023prediction} or augmented annotation \cite{do_ollion2024augmented}. However, the naive use of prediction models introduces bias \cite{egami}, making such an approach less relevant in official statistics or other applications where accurate statistical inference is required.

\subsection{Text classification and finite-population estimation}

Text classification, which involves categorising documents or text segments, has been a long-standing research topic \cite{eisenstein2018natural,wankhade2022survey,piryani2017analytical,kowsari2019text}. In 2017, a significant performance leap in text classification was achieved with the introduction of the transformer architecture \cite{vaswani2017attention}, and especially with pre-trained encoder-based transformer models such as the BERT and roBERTa models \cite{devlin2018bert,liu2019roberta}. These models, pre-trained on large textual corpora, are often state-of-the-art when used in downstream tasks, such as text classification \cite{beltagy2019scibert,sun2019fine,gasparetto2022survey}. These encoder-based models, fine-tuned on human annotations, have been shown to often outperform Large Language Models (LLMs) for classification \cite{ollion2023chatgpt,fields2024survey,kristensen2025chatbots}, and have a lower computational cost \cite{samsi2023words}.

The use of predictive models, such as text classifiers, is increasingly prevalent in finite-population settings \cite{beck2018machine, tille2022some}, often with the primary purpose of identifying or classifying units, as in household or business statistics. Random forests, neural networks, and support vector machines are commonly used, e.g. in the use of shrinkage, calibration, and treatment of nonresponse \cite{beck2018machine,tille2022some}. Similarly, LASSO, or $L_1$ regularization, has been proposed as a solution when a large number of auxiliary variables are available, e.g., from remote sensing data \cite{mcconville2017lasso}. However, these examples do not use the predictive models to improve the estimation of finite population parameters.

Recently, \textit{prediction-powered inference} (PPI) has been introduced \cite{angelopoulos2023prediction} as a method that leverages predictions from black-box machine learning models to enhance the estimation of target quantities, including means or regression coefficients, in conventional statistical settings. As an example, \cite{egami} uses large language models to boost statistical inference in regression models by combining ground truth labels and predicted labels. At the same time, \cite{egami} maintains asymptotically unbiased parameter estimates of model coefficients. However, \cite{egami} does not address relevant estimands such as population totals in finite-population settings, nor the fact that encoder-based transformers often outperform LLMs in text classification settings \citep{kristensen2025chatbots}.

\subsection{Swedish hate crime statistics}

Hate crimes are offences committed with a bias motivation, driven by fear, prejudice, or hatred toward specific demographic groups. In Sweden, for most criminal offences, a hate motive is an aggravating factor during sentencing \cite[Ch 29, § 2(7)]{SCC}. The definition of a hate-motivated crime, as defined by the Swedish National Council for Crime Prevention (SNCCP), is: \textit{crimes motivated by the perpetrator's prejudices or beliefs about race, colour, nationality, ethnic background, creed, sexual orientation or transgender identity or expression.}, \citep[our translation]{LundgrenLejonstad2023}. The focus is on the motive behind the offence rather than the specific type of offence committed.

Hate crime statistics commonly intend to estimate the prevalence of hate crime motives in the population of police reports \cite{TODO}, and are often produced to contribute knowledge for research, follow international obligations to report hate crime statistics, examine the police's reporting of hate crimes, and assist the judicial system with a basis for decisions aimed at preventing hate crimes~\cite{LundgrenLejonstad2023}. 

From 2007 to 2019, hate crime statistics in Sweden were produced by extracting all police reports containing one or more elements from a set of keywords strongly indicative of hate crimes. Of these selected reports, 50\% were randomly sampled and manually annotated by experts at SNCCP. As of 2020, the manual annotation subset comprises police reports manually flagged by the police as having a hate crime motive, and all these flagged reports are then manually annotated by experts at SNCCP. 

Since 2016, SNCCP's hate crime statistics have been derived every two years. The reason for this is the demanding and costly manual annotation procedure. Annotating police reports to determine whether a hate crime motive is present is time-consuming, especially since complex cases are often discussed among several experts, requiring a substantial amount of time \cite{LundgrenLejonstad2023}. In addition, the low proportion of police reports containing hate crime motives, only about 0.4\%, makes effective sampling from the population difficult.

The manual flagging by the police is performed directly when the police officer creates the report by responding to a prompt in the reporting software, which asks whether a hate crime motive is present. They have made this annotation since 2008, but it has only been used for hate crime statistics since 2020. Before 2020, the accuracy of police reporting was deemed insufficient \cite{snccp2018}. Currently, approximately half of the flagged reports are hate crimes \cite{LundgrenLejonstad2023}. In 2020 and 2022, of the approximately 1.5 million reports filed yearly, the police marked 6300 and 4800 reports as hate crimes, respectively. Out of these, SNCCP experts concluded that 54\% and 56\%  respectively were hate crimes. The most common types of hate crime motivations are based on xenophobia and race, at 53\% in 2022, followed by hate crimes against religious groups and LGBTI-related hate crimes, at 16\% and 12\% in 2022, respectively. Figure \ref{fig:number_hatecrime} displays the yearly numbers of confirmed hate crimes. 

\begin{figure}
    \centering
    \includegraphics[width=1\textwidth]{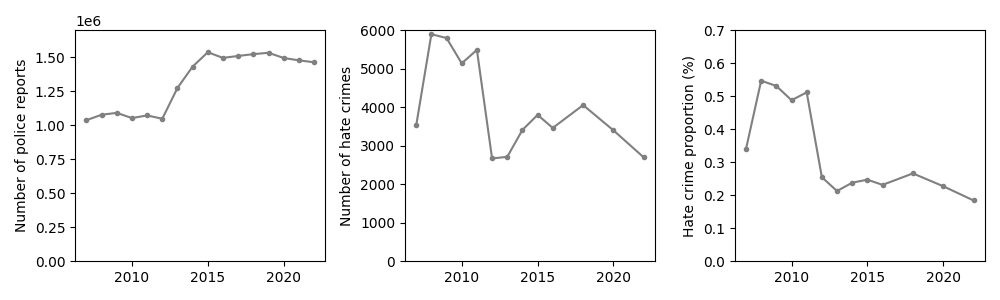}
    \caption{Number of police reports filed for the period 2007-2022, the number of confirmed hate crimes for this period, and their proportion of the total. Note that SNCCP did not produce hate crime statistics in 2017, 2019, and 2021.}
    \label{fig:number_hatecrime}
\end{figure}

Overall, the small proportion of hate crimes of all reported crimes also makes it difficult to sample effectively for specific hate crime types, e.g., to estimate the proportion of antisemitic hate crimes. Despite the challenges in sampling and estimating such small proportions, these crimes are of specific interest and are a key part of the statistics produced \cite{LundgrenLejonstad2023}.

Finally, the current approach of the SNCCP, which relies on flagged police cases, effectively solves the problem of selecting reports to annotate. However, outside of this subset in the police reports that were not flagged, it is not known how many hate crimes are missed. Hence, unbiased estimates of the yearly number of hate crime police reports cannot be obtained when relying solely on police annotations.

\subsection{Our contribution}

This work combines transformer-based neural networks with finite-population sampling estimators to obtain unbiased estimates of finite population parameters for extensive textual document collections. We present the proposed approach for Swedish hate crime statistics, a highly unbalanced textual data setting where standard sampling approaches without utilising auxiliary information are practically infeasible.

\newpage
\section{Prediction-powered finite-population estimation} 
\label{sec:estimators}

We propose a straightforward approach to estimating highly unbalanced proportions using prediction-powered sampling inference. The primary interest is the total (or proportion) of a binary variable in a finite population, i.e. $y\in\{0,1\}$, $t=\sum_{U}y_i$, where $U = \{1,...,N\}$ is a set of indices, representing the $N$ units of the population. Note that here, $y_i$ is fixed, but unobserved. The core idea of the prediction-powered finite population estimator has three steps:
\begin{enumerate}
    \item Train a classifier, in our case a transformer encoder neural network, denoted $f(x_i; \theta)$, where $x_i$ represents the texts of a police report $i$ and $\theta$ denotes the model parameters or network weights, on a binary classification task predicting the presence or absence of the label of interest $y_i$. Note that this classifier has been trained on different data from the finite population for which estimates are to be made.
    \item Use the model predictions for observations $x_i$, where the label $y_i$ is not observed, as an auxiliary variable used to sample from the finite population. 
    The auxiliary variable can be either the predicted probability $\hat{p}_i=f(y_i=1|x_i;\theta)$ or the predicted class \[
    \hat{y}_i =
    \begin{cases}
    1, & \text{if } \hat{p}_i \geq \tau \\
    0, & \text{if } \hat{p}_i < \tau
    \end{cases}
\]
where $\tau$ is the classification threshold. 
\item Take a random sample of size $n$ from the population $U$ and use sampling-based estimators to estimate the finite population estimands of interest. For example, we are interested in the total number of hate crimes in a finite population of police reports in the year 2022.
\end{enumerate}

We employ this approach to estimate Swedish hate crime statistics using three different standard estimators: probability-proportional-to-size (PPS) sampling with the Hansen-Hurwitz estimator, stratified random sampling, and stratified difference estimation.

\subsection{The Hansen-Hurwitz prediction-powered (H2P2) estimator}
We use the Hansen-Hurwitz (HH) estimator \cite{hansen1943theory,cochran1977sampling} to sample observations with replacement and sample with inclusion probabilities proportional to the auxiliary variable $\hat{p}_i$ produced by the prediction model, i.e., $\hat{p}_i=f(x_i|\theta)$. Our focus is text classification transformer prediction models, but in practice, any predictive model can be used. The total $t$ is estimated using the standard HH estimator $\hat{t}_{HH}$ which is given as follows,
\begin{equation*} 
\label{eq:hansen_hurwitz}
\begin{aligned}
    \pi_i &= \frac{\hat{p_i}}{\sum_{j \in \mathcal{S}} \hat{p_j}}\,, \quad &
    \hat{t}_{HH} &= \frac{1}{n}\sum_{i \in\mathcal{S}}\frac{y_i}{\pi_i}\,,
\end{aligned}
\end{equation*}
where $\pi_i$ is the probability of including the $i$-th unit of $U$ in the sample $\mathcal{S}=\{s(1),...,s(n)\}$, with $s(j) = k$ encoding that the $k$-th element of $U$ is drawn at sampling step $j$. For the variance, we have
\begin{equation*}
V(\hat{t}_{HH)} = \frac{1}{n} \sum_{i \in\mathcal{S}} \pi_i\left(\frac{y_i}{\pi_i} - t \right)^2
\end{equation*}
which is unbiasedly estimated by
\begin{equation} \label{eq:hh_variance}
    \hat{V}(\hat{t}_{HH}) = \frac{1}{n}\frac{\sum_{i \in \mathcal{S}}\left(\frac{y_i}{\pi_i}-\hat{t}_{HH}\right)^2}{n-1}
\end{equation}
The HH estimator is unbiased \citep[see][]{cochran1977sampling}. Additionally, if we select an auxiliary variable that is highly correlated with the target $y$, we can reduce the variance of the estimator. This means that we can improve the estimator's efficiency by enhancing the classifier's performance on the finite population of interest. We formalise this as Proposition \ref{prop_first}.

\begin{proposition} 
\label{prop_first}
Let $\ell_i(\hat p_i, y_i)=-y_ilog(\hat{p}_i)-(1-y_i)log(1-\hat{p}_i)$ denote the (pointwise) binary cross-entropy loss for the $i$-th point of our data set. Then, assuming that  the classifier improves and the total loss $\ell= \sum_i \ell_i$ reaches its theoretical minimum of zero, 
$$
V(\hat{t}_{HH}) \rightarrow 0.
$$
\end{proposition}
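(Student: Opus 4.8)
The plan is to reduce the Hansen--Hurwitz variance to a single non-negative sum and then show that sum collapses to $t^2$ in the zero-loss limit. First I would write the selection probabilities as $\pi_i = \hat p_i/P$ with $P=\sum_{j\in U}\hat p_j$, so that they sum to one over the population $U$, and expand the quadratic in the (population) variance formula. Using $\sum_{i\in U}\pi_i = 1$ and $\sum_{i\in U} y_i = t$, the cross term and the constant term combine as $-2t\cdot t + t^2 = -t^2$, leaving
$$V(\hat t_{HH}) = \frac{1}{n}\left(\sum_{i\in U}\frac{y_i^2}{\pi_i} - t^2\right).$$
Since $y_i\in\{0,1\}$ gives $y_i^2=y_i$, substituting $\pi_i=\hat p_i/P$ turns this into $\tfrac{1}{n}\bigl(P\sum_{i:y_i=1}\hat p_i^{-1} - t^2\bigr)$, in which only the $t$ indices with $y_i=1$ actually contribute.

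Next I would translate the loss hypothesis into pointwise statements about $\hat p_i$. Because each $\ell_i\ge 0$ and the population is finite, $\ell=\sum_i\ell_i\to 0$ forces every $\ell_i\to 0$; since $\ell_i=-\log\hat p_i$ when $y_i=1$ and $\ell_i=-\log(1-\hat p_i)$ when $y_i=0$, this means $\hat p_i\to 1$ for the hate-crime reports and $\hat p_i\to 0$ for the remaining reports. Taking these finite sums term by term, $P=\sum_{j\in U}\hat p_j\to t$ (the $y_j=0$ summands vanish) and $\sum_{i:y_i=1}\hat p_i^{-1}\to t$ (each summand tends to $1$, and there are exactly $t$ of them). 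Hence $P\sum_{i:y_i=1}\hat p_i^{-1}\to t\cdot t = t^2$, so $V(\hat t_{HH})\to \tfrac{1}{n}(t^2-t^2)=0$.

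The one point that needs care is that the estimator divides by $\hat p_i$, so I must ensure the denominators that survive do not blow up. This is exactly where the binary structure helps: the dangerous small denominators occur for the $y_i=0$ units, but those are annihilated by the factor $y_i=0$ and drop out of the relevant sum; the only surviving reciprocals $\hat p_i^{-1}$ are those with $y_i=1$, where $\hat p_i\to 1$ stays bounded away from zero. I would therefore phrase the conclusion as a limit as $\ell\to 0$ rather than as evaluation at an exactly attained minimum (where $\hat p_i=0$ would make $\pi_i$ ill-defined for the $y_i=0$ units), and since all sums are finite the interchange of limit and summation is immediate.
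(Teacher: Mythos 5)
Your proof is correct and follows essentially the same route as the paper's: expand the quadratic in the variance formula, use the binary structure of $y_i$ to reduce it to a sum of reciprocal inclusion probabilities over the positive units minus $t^2$, and then let the vanishing cross-entropy loss force $\hat p_i \to 1$ on positives and $\hat p_i \to 0$ on negatives so that this sum tends to $t^2$. The differences are cosmetic rather than substantive—you normalize $\pi_i$ over the population $U$ (the standard Hansen--Hurwitz formulation) and explicitly address well-definedness at the exact minimum by phrasing everything as a limit, whereas the paper runs the identical argument with sums indexed by the sample $\mathcal{S}$ and the identification $|S^+|=t$.
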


\begin{proof}
Let $S^+ = \{i \in \mathcal{S};\,y_i=1\} $ and $S^-=\{i\in \mathcal{S};\, y_i =0\}$ be a partition of $\mathcal{S}$, corresponding to the positive and negative labels of our \textit{resampled} data set. The variance function in Equation \eqref{eq:hh_variance} can then be decomposed, up to the normalising constant $\frac{1}{n(n-1)}$, as follows,
\begin{align*}
    V(\hat{t}) &\propto \sum_{i\in S^+}\pi_i(\frac{y_i}{\pi_i}-t)^2 + \sum_{i\in S^-}\pi_i(\frac{y_i}{\pi_i}-t)^2 \\
    &= \sum_{i\in S^+}\pi_i(\frac{1}{\pi_i^2} - \frac{2t}{\pi_i} +t^2) + \sum_{i\in S^-}\pi_it^2 \\
    &= \sum_{i\in S^+}\frac{1}{\pi_i} - \sum_{i\in S^+}2t +\sum_{i\in S^+} \pi_it^2 + \sum_{i\in S^-}\pi_it^2 \\
    &= \sum_{i\in S^+}\frac{1}{\pi_i} - t^2,
\end{align*}
where we in the last equality use that $|S^+| = t$ and that $\mathcal{S}$ is the disjoint union of $S^+$ and, $S^-$. Now,  the cross entropy loss can for any $k$ be rewritten as follows,
\begin{align*}
    e^{\ell_k} =\hat{p}_k^{-y_k}(1-\hat{p}_k)^{- (1 - y_k)} = \begin{cases}
        \hat{p}_k&k \in S^+ \\
        1-\hat{p}_k & k \in S^-
    \end{cases} .
\end{align*}
Since by construction $\hat p_k \in (0,1)$, it holds  $l_k > 0$ for all $k$. Hence, it is easy to see that the total loss $l$ tend to zero if and only if $l_k \to 0$ for all $k$. This yields in turn,
\begin{align*}
    \lim_{\ell \to 0}  \hat{p}_k = \begin{cases}
        1, & k \in S+ \\
        0 & k \in S^-
    \end{cases}.
\end{align*} and hence
\begin{equation*}
    \lim_{\ell \to 0} \pi_k =    \lim_{\ell \to 0} \frac{\hat{p_k}} {\sum_{j \in \mathcal{S}} \hat{p_j}} =
        \frac{1}{\sum_{i \in S^+}1 + \sum_{i \in S^-}0} = \frac{1}{t}.  
\end{equation*}
 Since this clearly holds for any $k \in S^+$ we finally receive 
\begin{equation*}
    \lim_{\ell \to 0}V(\hat{t}) = \sum_{i\in S^+}t - t^2  = 0,
\end{equation*}

which was to be shown.
\end{proof}

\subsection{Stratification-by-prediction (SbP) estimation}

There are clear benefits in using the H2P2 estimator for unbiased estimation of the total of interest. However, in many settings, there is an interest in using population stratification instead. At the SNCCP, the primary interest is not necessarily only the total number of hate crimes, but different domain estimates, such as the number of antisemitic hate crimes. Hence, an alternative is to use the predicted category as a stratification variable \cite{sarndal2003model}. We then get two strata. For convenience, we refer to the stratum consisting of all police reports predicted by the model to be hate crimes ($\hat{y}_i=1$) as the ``one-stratum'', and the stratum of all predicted non-hate crimes ($\hat{y}_i=0$) as the ``zero-stratum''. We can then choose different sampling strategies in the strata defined by the classifier. For example, we can focus the sampling effort on the one-strata where, if the classifier is good, the majority of hate crimes could be found.

\paragraph{Difference estimation in the zero-strata}
Suppose we use the classifier for stratified sampling and estimation. In that case, the zero-stratum will contain fewer and fewer positive observations ($y_i=1$) as the classifier performance in cross-entropy loss improves in the population of interest. 

We can use either the H2P2 estimator in the zero-stratum or the difference estimator \cite{cochran1977sampling}, which utilises the auxiliary variable in estimation, but only with a simple random sample. This leverages the auxiliary information from the classifier in addition to the stratification. The estimate of the total and its variance can then be combined based on the estimates in the two strata.

\section{Transformer-powered hate crime estimation}

Our use case for prediction-powered finite-population estimation is the Swedish hate crime statistics, which are based on Swedish Police reports and expert-annotated police reports.

\subsection{Swedish police report data} \label{sec:police_reports}

Between 2007 and 2022, a total of 21.6 million police reports were filed in Sweden. Among these, \num{485000} reports did not contain any text and are excluded from our study, as is done in the current production of hate crime statistics. Of the remaining reports, SNCCP experts have manually classified a total of \num{52046} reports as having a hate crime motive and \num{61741} reports as not having a hate crime motive. The remaining reports are not annotated (21.0 million). These remaining reports were not included in the SNCCP hate crime sample selection based on keyword search (from 2007 to 2019) or were not flagged as hate crimes by the police (in 2020 and 2022). The cases annotated as hate crimes have additional information regarding the type of hate crime (e.g., antisemitic or afrophobic hate crime).

When a police report is created, the suspected crime is briefly described in text, and the police officer indicates whether a hate crime motive is suspected in the reporting software. The descriptive text in a police report is often just a few sentences long, describing the crime and sometimes providing additional brief information, such as whether the action was captured on a surveillance camera or if the victim requests financial compensation. Two fictitious examples of the text section of police reports, where the second one has a hate crime motive while the first one does not, are:

"\textit{CRIME Simon works as a social security worker. During the last week, he has received several calls and text messages from phone number XXX with threats related to an ongoing case, saying that he will get in serious trouble if the case is not dropped. Simon fears for his safety because of this.}", 
and 
"\textit{CRIME Alicia reports an incident occurring in a social media thread. She posted a wedding picture of her and her wife. In the comments section, the account [Surname] [Lastname] wrote that it was "disgusting" and that "marriage should not be allowed for people like you". Printscreens are saved and can be supplied}". 

The police reports are, in general, written in the Swedish language. The average number of words per report in the available data is 103.

A challenge is that among flagged reports in 2020 and 2022, 289 and 233 cases, respectively, were labelled ``Probably not'' by experts at SNCCP due to them having hate crime-like characteristics while not fully meeting the criteria for hate crime motives. We excluded these cases from training and kept them as a separate class for testing and evaluation due to their known difficulties, also for SNCCP experts. Hence, the data consists of four types of data: (1) the true hate crimes, obtained from the SNCCP annotations; (2) the true non-hate crimes, obtained from the SNCCP annotations; (3) all other police reports that are not annotated, and (4) difficult or complex cases according to experts at the SNCCP.

\subsubsection{The incitement against ethnic group dataset}

To study the performance of the estimators, we also created a subset of the dataset with known labels for all observations. This subset was created based on the crime of \textit{incitement against ethnic group} (``hets mot folkgrupp'' in Swedish), a hate crime by definition \cite[Ch 16, § 8]{SCC}. We use this to create a dataset with known labels for all observations, eliminating the need for manual annotation of police reports and enabling a Monte Carlo simulation. This is achieved by training a classifier to predict whether a police report constitutes \textit{incitement against an ethnic group} or not, and then generating predictions for a population of unseen cases. To maintain a class distribution similar to the overall hate crime classification problem, we sub-sampled the non-hate crime population to achieve a comparable balance.

\subsection{Step 1: Transformer neural network for hate crime classification} \label{sec:model_train}

We use pre-trained transformer encoder models BERT and roBERTa \cite{devlin2018bert,liu2019roberta} to develop a hate crime classifier based on the text of the police reports. Specifically, we use Swedish versions of these models, trained by the National Library of Sweden \cite{malmsten2020playing}. To adapt them to the specific classification task, we fine-tune them on previously annotated police reports, resulting in a hate-crime text classifier. In addition to fine-tuning the off-the-shelf models directly, we also study an intermediate domain adaptation (DA) step \cite{gururangan2020don}, to adapt the model to the specific language domain of police reports.

\subsubsection{Training, testing and validation}

As training data, we include all \num{49351} true hate crimes, \num{58881} true non-hate crimes, and \num{50000} other crimes, selected randomly (Table \ref{tab:train_val_test}). The true hate crimes are labeled $y_i=1$, while the true non-hate crimes and all other crimes are labelled $y_i=0$, assuming that randomly sampled police reports are non-hate crimes. Assuming that reports not flagged by the police are non-hate crimes will introduce a small amount of label noise, as some unflagged reports may be false negatives. However, we do not expect this to have a large impact due to the generally small proportion of hate crimes in the population \cite[see][for a discussion]{PUlearning}. 

The motivation behind including both true non-hate crimes and other crimes as $y=0$ is the distributional similarity between true hate crimes and true non-hate crimes. This similarity is a consequence of the keyword search/police flagging to extract reports for annotation. Hence, true hate crimes and confirmed non-hate crimes often share keywords. The randomly drawn reports assumed to be non-hate crimes contain signals to distinguish true hate crimes from a broad range of other crimes.

The validation split is a random sample of 15\% of the training data, comprising \num{23734} observations, with the same class balance as the training split. All observations from 2022 were held out as the test set and used for estimating the generalisation performance.

\begin{table}[]
    \centering
    \begin{tabular}{lcr}
       \toprule
       Training Data & Label & n  \\
       \midrule
       hate crime  & 1 & \num{49351} \\
       not hate crime  & 0 & \num{58881} \\
       other crimes  & 0 & \num{50000} \\
       \midrule
       Test Set &  &  \\
       \midrule
       hate crime & 1 & \num{2695} \\
       not hate crime & 0 & \num{1860} \\
       potential hate crime (complex case) & 0 & \num{233} \\
       unknown & NA & \num{1458974} \\
        \bottomrule
    \end{tabular}
    \caption{Number of police reports per category in training data and test set. The test set is all police reports from 2022. Here ``hate crimes'' and ``non-hate crimes'' are cases annotated by SNCCP, and ``other crimes'' are randomly selected from the pool of police reports that are not annotated. In addition, ``potential hate crimes'' is a possible hate crime, but the text lacks sufficient information to be certain. ``unknown'' is a non-annotated report.}
    \label{tab:train_val_test}
\end{table}

When fine-tuning the models, we used a maximum learning rate of 2e-5, a batch size of 16, the AdamW optimiser \cite{adamw}, and a linear learning rate scheduler with zero warm-up steps \cite{adamw_loshchilov2017decoupled,bengio2017deep}. We also employed early stopping based on the validation loss, as results from \cite{chalkidis2020legal} suggest that the recommended initial 2-4 epochs \cite{devlin2018bert} are not always sufficient for fine-tuning.

The \textit{incitement against ethnic group}-classifier was trained with the domain-adapted (see next section) BERT model on data from 2007-2021. All \num{15000} \textit{incitement against ethnic group} police reports were used as the positive class, together with a random sample of \num{30000} other crimes as the negative class. The held-out test set was 2022-data, comprising all 944 \textit{incitement against an ethnic group} cases and a random sample of \num{190000} other crimes. This random sample was selected to ensure a similar class balance to the hate crime classifier, and the same hyperparameter setting was used during fine-tuning.

\subsubsection{Domain adaptation}
Domain adaptation, i.e., continued pre-training of an encoder-based transformer model on a domain-specific corpus, has been shown to improve performance on the downstream task when using transformer encoder models \cite{vakili2022downstream, gururangan2020don}. This second pre-training phase was done by training the off-the-shelf roBERTa-model on the standard pre-training task of masked language modelling (MLM). Following results from \cite{wettig2022should}, however, we increased the masking probability to 20\% in contrast to the otherwise commonly used 15\% employed by many of BERT's predecessors \cite{liu2019roberta,joshi2020spanbert, lan2019albert, levine2020pmi, izsak2021train}. The data used for this pre-training was all police reports from 2007-2022.

We also used dynamic masking, meaning that the random masking of the training data is re-made for every epoch, as done in the original RoBERTa training \cite{liu2019roberta} while keeping the masking of the validation set fixed. A linear learning-rate scheduler with a 1\% warm-up period followed by a decay to zero was used \cite{gpt3,touvron2023llama,liu2019roberta}. Again, the maximum learning rate was 2e-5. We used the AdamW optimiser, a sequence length of 512 tokens, and a batch size of 16. The domain-adaptation training was run for ten epochs.

\subsection{Step 2 and 3: Finite population estimation} \label{sec:finite_pop_estim}

Our goal is to produce efficient estimates by leveraging predictions made by the encoder-based models as auxiliary information. On the test set, which includes all police reports from 2022, we derive estimates of the population total $t_p$, i.e. the total number of reported hate crimes in Sweden 2022, and police under-reporting $t_u=t_p-t_f$, where $t_f$ is the confirmed hate crimes in the flagged cases, using the estimators in Section \ref{sec:estimators} and the auxiliary variable from the hate crime classifier.

\section{Results}
We begin by evaluating the performance of the trained classifier and comparing it to the police annotations currently used by SNCCP. We then use the trained classifier on the known-total subset of the corpus to study the empirical properties of the estimator on real data and finalise by estimating true hate crimes for the year 2022.

\subsection{Step 1: Transformer classifier performance for Swedish hate crimes}

After training the transformer model on data from 2007 to 2021, we evaluated the model by predicting the expert-annotated cases in 2022, which consisted of 2,695 hate crimes and 1,860 non-hate crimes among the 5,788 reports flagged by the police as hate crimes. The performance of the models is shown in Table \ref{tab:model-results}. For the BERT-type models, the accuracies and F1-scores were approximately 95\% and 96\%, respectively. The roBERTa-type models showed slightly lower performance. Since the BERT-DA model had the highest F1 score, it was used for the preceding comparison with the police and the prediction-powered inference. 

Of the 233 complicated cases labelled ``Probably not'', this model predicted 144 as not hate crimes and 89 as hate crimes, showing the difficulty in these cases.

\begin{table}[ht]
  \centering
  \begin{tabular}{llllllrr}
    \toprule
    {Model}   & {Acc.} & {F1}    \\ 
    \midrule
    BERT     & 0.951  & 0.9580  \\
    BERT-DA & 0.951 & 0.9584  \\
    roBERTa   &  0.923  &  0.932  \\ 
    roBERTa-DA & 0.937  & 0.945  \\
    roBERTa-DA-2* & 0.923 & 0.932 \\
    \bottomrule
  \end{tabular}
  \caption{Performance of the models on a held-out test set consisting of expert-annotated cases from 2022 for standard and domain-adapted (DA) models.*Earlier version fine-tuned on training data with all true hate crimes, \num{25000} true non-hate crimes and \num{25000} other crimes, used for error analysis in \cite{sargeant2025}.}
  \label{tab:model-results}
\end{table}

Table \ref{tab:model-results} shows that the overall results are relatively good for classifying hate crimes. However, there is no larger difference between the models used here. 

For the roBERTa-DA-2 model, we did a more extensive analysis of the errors made by the model \cite[see][for details]{sargeant2025}. It is clear that the model struggles to correctly classify complex hate crimes, such as Quran burnings (due to legal complexity) and anti-Sámi hate crimes, due to small sample sizes.

\subsubsection{Classifier F1-score on the full 2022 test set}


Since only the flagged cases have known labels, we estimated the overall F1-score for all cases in 2022 using the BERT-DA model through a sampling procedure followed by the delta method (see Appendix~B for details).

To do this, we sampled and manually annotated cases in two strata. The first stratum consisted of cases where the police predicted \textit{not hate crime} but the model predicted \textit{hate crime} ($N_1 = 4964$). From this stratum, we drew an SRS of size $n = 200$, in which 104 cases were categorized  as hate crimes in the manual annotation. The second stratum consisted of cases where both the police and the model predicted \textit{not hate crime} ($N_0 =$\num{1458798}). Here, we drew a sample of size $n = 200$ with replacement and inclusion probabilities proportional to the model's predicted probabilities $\hat{p}_i$. No hate crimes were found in this sample.

Based on this stratified sample, we estimated the model’s F1-score to be $0.80 \pm 0.02$ on the 2022 test set.

The \textit{incitement against ethnic group} classifier had an F1-score of 68\% and accuracy of 99.6\% on the test set. The test set for this classifier contains 0.5\% hate crimes, i.e., it is heavily imbalanced, and even more so than the 2022 full test set for hate crimes. However, the results in F1 are similar in size.

\subsubsection{Comparing police annotation and the BERT-DA in 2022} \label{sec:f1_estim_methods}

As SNCCP currently operate, it use police flags, in a sense, as predictors for hate crime. Therefore, we evaluate the police performance and compare it with that of the BERT-DA classifier. To compare the classifier and the police, we focus on the cases where the two methods disagree. We use the manually annotated subset where the police predicted \textit{not hate crime} and the model predicted \textit{hate crime} ($N_1=4964$, stratum 1) described in the previous section, i.e., taking a sample of size $n_1=200$ out of which 99 samples were indeed a hate crime, leading to an estimated proportion of 0.50 (SE 0.035).

Similarly, we can study the stratum where the police indicate \textit{hate crime} and the model concludes \textit{not hate crime}. For this stratum, we already have the ground truth based on expert annotations from the SNCCP. Of the $N_2=1912$ observations in stratum 2, only 137 were actual hate crimes. Hence, out of the total number of differences between the model and the police, the BERT-DA model was correct in roughly 0.62 $[CI:0.55,0.69]$ in 2022. The resulting estimates in the two strata and the total number of errors can be found in Table \ref{tab:police_vs_model}. This indicates a higher accuracy of the model compared to the police annotation. 

\begin{table}[b]
    \centering
    \begin{tabular}{crrrr}
    \toprule
       Stratum & $N$ & $n$ & $\hat{p}$ & $\hat{t}$ \\
       \midrule
     1  & 4964 & 200 & 0.50 (0.035) & 2457 (172)\\
      2 & 1912 & - & 0.93 & 1775 \\
      \midrule
      Total & 6876 & - & 0.62 (0.035) & 4232 (172) \\      
      \bottomrule
    \end{tabular}
    \caption{Comparison in the proportion of correct classifications between the police and the transformer when models and police differ, where $\hat{p}$ is the proportion and $\hat{t}$ is the total of correct classifications for the transformer model, SE in parentheses. Stata 1 consists of observations where police indicate no hate crime while BERT-DA indicates hate crime. Note that in stratum 2, police indicate hate crime, BERT-DA does not, we know the true total.}
    \label{tab:police_vs_model}
\end{table}

\subsection{Step 2 and 3: Prediction-powered inference of Swedish hate crimes}

The transformer models for classification of hate crimes and \textit{incitement against ethnic group} trained in Section \ref{sec:model_train} are used for estimation using the approaches in Section \ref{sec:estimators}. With the \textit{incitement against ethnic group} classifier, we perform simulations estimating the total. With the hate crime classifier, we sample once and annotate manually.

\subsubsection{Monte Carlo experiments on known-population subset}

Based on the known-population subset of \textit{incitement against ethnic group} and with a sample size of $n=500$, the sampling distribution for the H2P2 estimator and the SbP estimation with SRS, can be seen in Figure \ref{fig:simulations}. We can see that the H2P2 estimator works well in terms of smaller variance, even for estimating a small proportion with $F_1$ of 0.68, and gives an unbiased estimate of the true total \cite{cochran1977sampling}. Based on the Monte Carlo simulations, we observe that the H2P2 estimator has a standard error of 102 (Table \ref{tab:monte_carlo}). To get a similar standard error for a random sample, we would need approximately \num{12900} observations. Hence, the classifier improved the sampling efficiency quite remarkable. It can, however, be noted that a slight drawback is that the estimators sampling distribution has a thick right tail. This occurs when false negatives with very small inclusion probabilities are included in the sample.

\begin{figure}[t]
    \centering
    \includegraphics[width=0.9\textwidth]{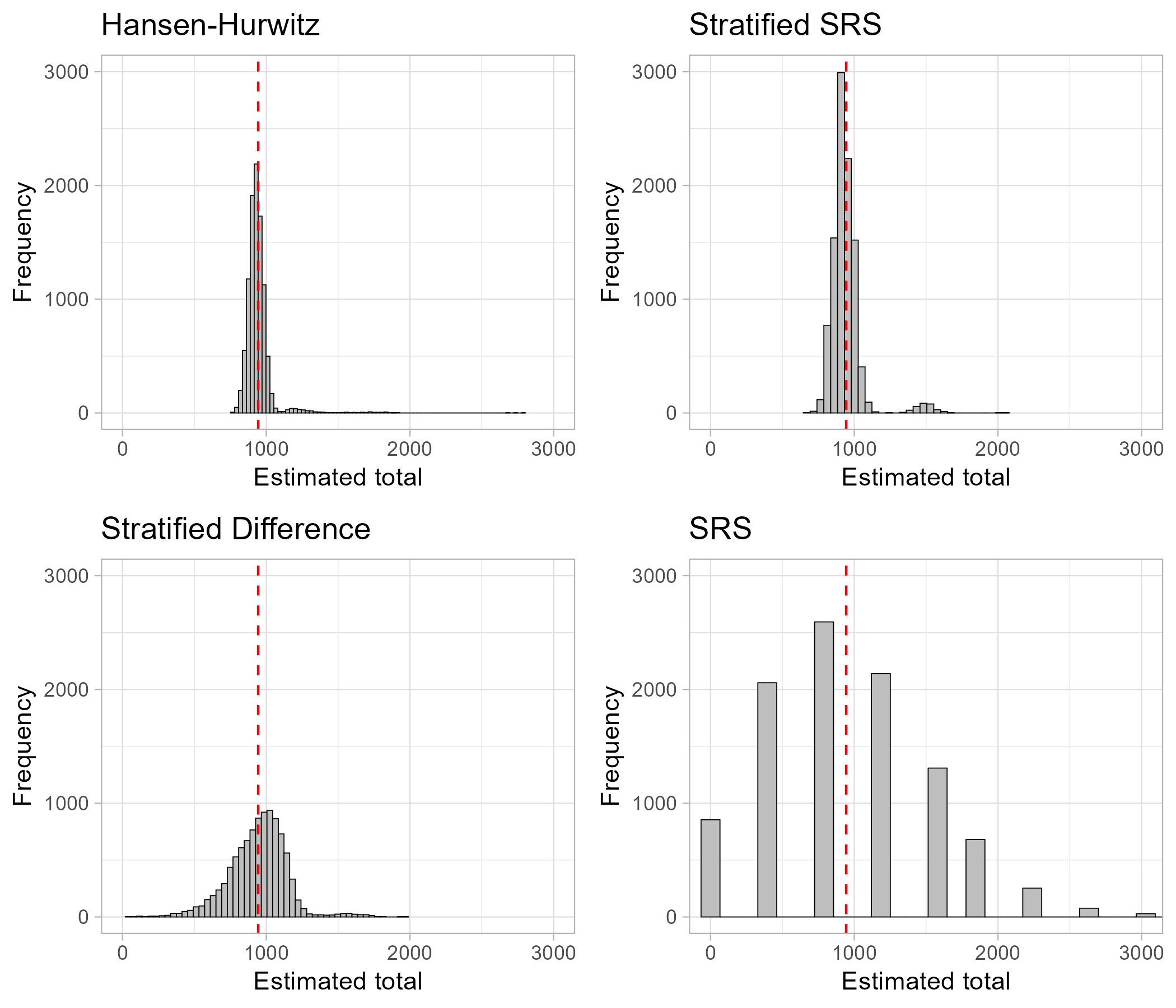}
    \caption{Histogram of simulating \num{10000} H2P2 estimates, SRS without stratification, SbP using SRS, SbP with difference estimation in the zero-stratum within the subset of \textit{incitement against ethnic group} with $n=500$. The red line is the true total.}
    \label{fig:simulations}
\end{figure}

For stratification by prediction, Neyman allocation was applied based on the within-stratum population variances, with simple random sampling (SRS) used within each stratum. The resulting sample sizes were 231 in the one-stratum and 269 in the zero-stratum. In contrast to the H2P2 estimator, we can observe the issue with the zero-stratum. Many of the samples do not contain any $y_i=1$. The estimated total in that stratum using the standard SRS variance estimator is then zero. However, when at least one $y=1$ is drawn present in that stratum, the estimated total is substantially larger, resulting in a bimodal sampling distribution. This result emphasises the problem of having a binary target variable in an highly imbalanced population and using a stratified SRS.

One way to solve this is by using the difference estimator within the zero-stratum. If the stratum consisting of predicted non-hate crimes contain no true hate crimes, the estimated total in that stratum is $\hat{t}_{D}=t_{\hat{p}}-\hat{t}_{\hat{p}}$. This is the difference between the estimated total of the auxiliary variable and the true total of the auxiliary variable, which is, in practice, always nonzero. However, the drawback of this estimator is the larger variance. As a baseline comparison, SRS with no stratification is also included, showing clear signs of the the rare number of \textit{incitement against ethnic group}. The large variance can be seen in Figure \ref{fig:simulations}.

\begin{table}[b]
    \centering
    \begin{tabular}{lrr}
    \toprule
       Estimator  & SE & DEFF \\
       \midrule
       SRS & 600 & \\
       HH & 102 & 0.029 \\
       Stratified SRS & 116 & 0.038 \\
       Stratified Diff & 203 & 0.11 \\
       \bottomrule
    \end{tabular}
    \caption{Standard errors (SE) and design effect (DEFF) for Monte Carlo estimation of population total in known subset of \textit{incitement against ethnic group}. The estimators are HH, stratified SRS, and difference estimator. We used 10~000 simulations with a sample size of $n=500$ and Neyman allocation when stratifying. The difference estimator was only used in the zero-stratum. In the one-stratum, SRS was used.}
    \label{tab:monte_carlo}
\end{table}

We can summarise the results as the H2P2 estimator is a suitable choice if statistical efficiency is important. However, if the setting is more like that of SNCCP, in which the main use of a classifier would be for stratification purposes, combining a stratification-by-prediction and a difference estimator is more practical.

\subsubsection{Estimating the total Swedish hate crimes in 2022 and police under-reporting} \label{sec:total_estim}

After obtaining the auxiliary variable with the transformer classifier, we estimate the population total in the 2022 police reports by sampling and manually annotating. The annotation here was done by one of the authors, and hence the quality is not the same as SNCCP experts. Based on using the H2P2 estimator and BERT-DA, with a sample size of $n=200$, the total number of hate crimes was estimated to be \num{6051}, with a standard error of 548 (see Appendix A for the distribution of the auxiliary variable $\hat{p_i}$ from the BERT-DA model in this sample).

This result can be compared to a situation where we have no auxiliary information and take an SRS. In such a case, the same standard error as the H2P2 estimator is reached when taking a sample of approximately \num{28000} observations. Compared to the sample size of $n=200$ used with the H2P2 here. This implies a design effect of 0.0068, highlighting again the usefulness of prediction-powered estimation.

In the SbP, we used equal allocation with ($n=100)$ in the one-stratum and ($n=100$) in the zero-stratum. 
The results after annotating were 60 hate crimes in the one-stratum and zero hate crimes in the zero-stratum, giving an estimated total of 4618 hate crimes. Since the variance estimate in the zero-stratum is zero, the variance estimate for the total is likely an underestimate, again highlighting the issue of using SRS in the zero-stratum. When applying the difference estimator in the zero-stratum to mitigate this, the estimated total was 6193 with an estimated standard error of 1220.

Finally, we estimate the number of hate crime missed in 2022 by subtracting the confirmed number of hate crimes in the flagged cases 2022 from the H2P2 estimate. The result is an estimated under-reporting of between 2260 and 4452 hate crimes compared to the estimate based purely on police flagged hate crimes.

\begin{table}[h]
    \label{surveytable}
    \centering
    \begin{tabular}{lrr}
    \toprule
        Estimator  & Estimate & SE  \\
        \midrule
        HH  & 6051 & 548  \\
        Stratified SRS  & 4618 & 263  \\
        Stratified Diff & 6193 & 1220 \\
    \bottomrule
    \end{tabular}
    \caption{Estimated total and standard error of hate crimes in police reports 2022, using HH, SBP with SRS and difference estimation (only in the zero-stratum). The BERT-DA model was used to create the predictions. $N=$\num{1463762}, $n=200$, with $n_1=100$ and $n_2=100$ when using SBP.}
\end{table}

\newpage
\section{Discussion and conclusion}

We introduce a simple yet effective approach to use model predictions as an informative auxiliary variable in sampling estimators to obtain unbiased finite population estimates. We demonstrate the applicability in a real-world setting of Swedish hate crime estimation. The predictive model performed well on the labelled section of the test set, with both accuracy and F1-score above 95\%. The overall F1-score, obtained by estimating the model's performance also on non-annotated reports, was 80\%. 
Based on the comparison of the trained transformer and the police, we found that the model outperforms the police in classifying and annotating hate crimes, opening up new possibilities for prediction-powered estimation of hate crime statistics and other similar settings.

We then examined how the predictions from this model could be utilised as auxiliary information in three distinct ways when estimating the total number of hate crimes in Swedish police reports. Incorporating the model predictions into the estimation can greatly improve efficiency, with an approximate design effect of 0.0068. Using the H2P2 estimator with a sample of size 200 gave an estimate of 6051 $[CI: 4955, 7147]$ hate crimes in the police reports 2022. From this, we estimated that the Swedish police missed to flag somewhere between 2,260 and 4,452 true hate crimes in their annotations in 2022.

However, in practice, domain estimation is also of interest in the production of official statistics. In the case of hate crimes, this can be the hate crime type, the type of environment where the offence took place, or the relational status of the victim and perpetrator. When estimating the total for the entire population using a stratified random sampling approach, a limitation arises from estimation in the zero-stratum. The reason is the large number of police reports in this stratum, in combination with the low proportion of actual hate crimes — especially if the classifier is accurate, resulting in a bimodal sampling distribution. The difference estimator, or the H2P2 estimator, can estimate the proportion in the zero-stratum to mitigate the problem.

There are multiple directions of future work, both regarding estimation and the development of classifiers for statistical purposes. One is to further study the theoretical properties of prediction-powered sampling estimators. This would also include extending the results in this paper to the use of the Horwitz-Thompson estimator instead of the Hansen-Hurwitz estimator. 

As shown, the limitation of the method is the need for a good classifier. Without high-performance classifiers, there will be no performance gains for the estimator. Hence, how to construct well-functioning prediction-powered estimation in a statistical setting is an important field for future studies.

The use of an LLM is a promising approach, enabling prediction-powered finite population estimation with significantly less training data. It may also improve challenging contextual textual classification tasks, as identified in \cite{sargeant2025}, for example, by facilitating the identification of specific hate crime categories. However, to use the H2P2 estimator, a numerical auxiliary variable is needed, and LLMs does not provide this by default. However, the sensitive data used in this paper implies that the models need to be stored and used locally. With the current hardware available at SNCCP, only very small LLMs could be used.

The primary motivational use case for this paper is the analysis of public hate crime statistics. However, this approach should work well in many similar situations where large corpora of textual documents are of interest for the production of statistics. In particular, this could include other areas of crime or legal statistics, such as fraud or domestic violence. 

The methodology presented here offers a robust framework for transforming unstructured textual data into meaningful statistical insights that can inform evidence-based policy decisions across multiple domains of social concern.

\subsubsection*{Ethics Statement}
The ethical review was carried out by the Swedish Ethical Review Authority in accordance with the Act (2003:460) concerning the ethical review of research involving humans. The Swedish Ethical Review Authority is an independent governmental agency responsible for evaluating research proposals to ensure compliance with Swedish laws and ethical standards. For this review, we submitted an application consisting of a general research plan, the principal investigator's CV, and a standardized form addressing research objectives, methods, ethical considerations, and timeline. These documents were subsequently assessed by the Swedish Ethical Review Authority. 
The ethical review was assigned the number 2022-05586-01 and was approved on 25 October 2022.

\bibliographystyle{unsrt} 
\bibliography{refs}

\newpage
\appendix
\section*{Appendix}
\section{Auxiliary variable distribution}

Figure \ref{fig:HH_aux_hist} shows the distribution of the auxiliary variable, $\hat{p}$, on the observations in the sample used for the Hansen-Hurwits estimator, along with the distribution of $\hat{p}$ in all observations from 2022. It is clear that the sample increases the probability of a true hate crime being included in the sample, comparing to taking an SRS.

\begin{figure}[h]
    \centering
    \includegraphics[width=1\textwidth]{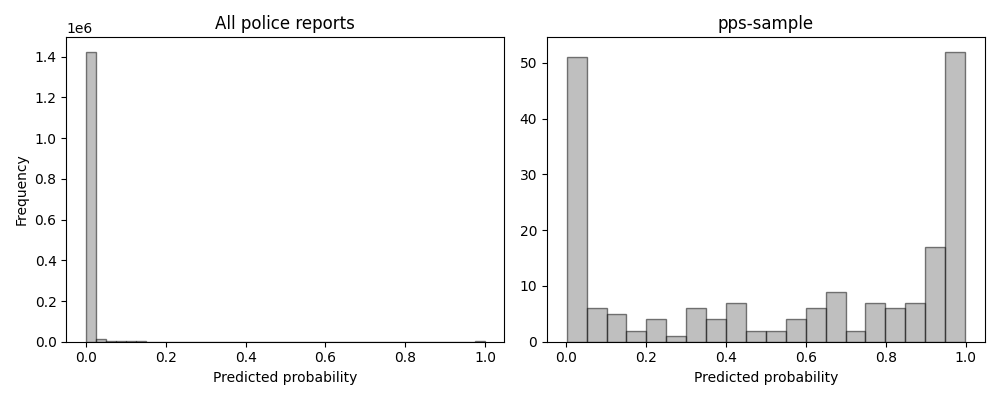}
    \caption{Histogram of the auxiliary variable, $\hat{p}$, in all police reports from 2022, and when in a sample of $n=200$ taken with replacement and inclusion probability proportional $\hat{p}$.}
    \label{fig:HH_aux_hist}
\end{figure}

\section{Delta method for transformer F1-score}

Since the F1-score is a differentiable function of the true positives (TP), false positives (FP) and false negatives (FN), we can use the delta method to find the derivative. The delta method is

$$
\nabla f(\theta)^T \cdot \frac{\Sigma}{n} \cdot \nabla f(\theta)
$$

We first notice that FP can be expressed in terms of TP as $C-TP$ where $C$ is a known constant. We then set $TP=\theta_1$ and $FN=\theta_2$, and get the expression

$$
f(TP,FP,FN) = \frac{2 \times TP}{TP+FN+C}
$$

Furthermore, in the stratum where the police and model both predicted ``not hate crime'', of $n=200$ samples, no hate crimes were present. Since the estimated FN is then zero, and hence its variance also, the problem further simplifies to a one-parameter problem of $TP$. The first-order derivative w.r.t $TP$ is 

$$
\frac{\partial f(TP,FN,FP)}{\partial TP} = \frac{2(FP+C)}{(FN+FP+C)^2}
$$

Finally, we get the estimate as 

$$
V(f(TP,FP,FN))=\left(\frac{\partial f(TP,FN,FP)}{\partial TP}\right)^2\hat{V}(\hat{TP})=2.6\times 10^{-4}, 
$$
where $\hat{V}(\hat{TP})$ is the estimated variance of TP. From this we get $SE_{F1}=\sqrt{\hat{V}(\hat{TP})}=0.02$.

\end{document}